\DeclareMathOperator{\Tr}{Tr}
\newtheorem{theorem}{Theorem}
\newtheorem{definition}{Definition}
\begin{document}
\begin{frontmatter}



\title{Deep Matrix Factorization with Adaptive Weights for Multi-View Clustering}


\author[alteca]{Yasser KHALAFAOUI\corref{cor1}}
\cortext[cor1]{Corresponding author}

\affiliation[alteca]{organization={ALTECA},
            addressline={\{mykhalafaoui, mlovisetto\}@alteca.fr}, 
            city={Lyon},
            country={France}}

\author[lipn]{Basarab MATEI} 
\author[alteca]{Martino LOVISETTO} 
\affiliation[lipn]{organization={Sorbonne Paris Nord University},
            addressline={basarab.matei@lipn.univ-paris13.fr}, 
            city={Villetaneuse},
            country={France}}

\author[cy]{Nistor GROZAVU} 

\affiliation[cy]{organization={CY Cergy Paris University},
            addressline={nistor.grozavu@cyu.fr}, 
            city={Pontoise},
            country={France}}

\begin{abstract}
Recently, deep matrix factorization has been established as a powerful model for unsupervised tasks, achieving promising results, especially for multi-view clustering. However, existing methods often lack effective feature selection mechanisms and rely on empirical hyperparameter selection. To address these issues, we introduce a novel Deep Matrix Factorization with Adaptive Weights for Multi-View Clustering (DMFAW). Our method simultaneously incorporates feature selection and generates local partitions, enhancing clustering results. Notably, the features weights are controlled and adjusted by a parameter that is dynamically updated using Control Theory inspired mechanism, which not only improves the model's stability and adaptability to diverse datasets but also accelerates convergence. A late fusion approach is then proposed to align the weighted local partitions with the consensus partition. Finally, the optimization problem is solved via an alternating optimization algorithm with theoretically guaranteed convergence. Extensive experiments on benchmark datasets highlight that DMFAW outperforms state-of-the-art methods in terms of clustering performance.
\end{abstract}

\begin{keyword}
Multi-view Clustering, Matrix Factorization, Unsupervised Learning


\end{keyword}

\end{frontmatter}


\section{Introduction}
In the era of big data, one frequently encounters datasets with multiple sources or views, each offering a unique perspective on the underlying phenomena. These diverse views may capture distinct aspects, including textual information, visual features, or temporal dynamics. To exploit intrinsic information across these different views, substantial research efforts  have been dedicated to the development and enhancement of multi-view clustering (MVC) \cite{kumar2011co,li2016multiple,wang2019multi,tan2020unsupervised,wang2020learning,chen2021relaxed}, with a particular emphasis on Matrix Factorization-based approaches \cite{zhao2017multi,wen2018incomplete}.  Notably, Non-negative Matrix Factorization (NMF) methods have demonstrated their effectiveness in handling high-dimensional data while capturing underlying structures across different views \cite{liu2013multi}.\par
NMF has been applied in a range of fields such as clustering \cite{ding2005equivalence}, document understanding \cite{wang2010weighted} and representation learning \cite{trigeorgis2014deep}.
The core idea behind NMF is to decompose a given high-dimensional data matrix into two low rank matrices. Notably, NMF imposes a non-negativity constraint during the factorization process. This constraint simplifies the interpretation of the resulting matrices, allowing for a more intuitive and interpretable analysis \cite{brunet2004metagenes}. By extending NMF to accommodate diverse data views, one is able to integrate and exploit the complementary information from these different views. This extension enhances the accuracy, robustness and interpretability of the clustering process.
In the literature, many NMF-based multi-view clustering methods have been proposed. Some approaches \cite{chang2014multi,jiang2014semi,liu2014partially} introduce a sparse model that learns discrete clustering labels based on the shared latent representation. Others \cite{liu2018consensus,khalafaoui2023joint} propose a joint multi-view consensus clustering method to address late fusion (i.e., partition level) and the mutual update between the consensus partition matrix and the local partition matrices. However, most single layer NMF methods are unable to extract deeper and hidden information of data which may impact the clustering results. Recently, a number of deep NMF-based multi-view clustering methods have been developed. In order to guide the shared representation learning in each view, \citet{zhao2017multi} combine a deep semi-NMF structure to extract hidden information with a graph regularizer. \citet{huang2020auto} suggest utilizing a collaborative deep matrix decomposition framework to learn the hidden representations. To extract multi-view information, \citet{zhang2021multi} fused each view's partition representations, found by deep matrix decomposition, into a consensus partition representation.\par
While deep NMF-based multi-view clustering approaches have shown promising results, significant challenges persist. A major issue is that these methods typically perform clustering across the entire feature space, without distinguishing between more and less important features, which can lead to suboptimal clustering results. Additionally, the effectiveness of these approaches is often hampered by the need for precise selection of various hyperparameters, such as the number of layers, the dimensions of each layer, and specially parameters that control the degree of feature selection (i.e., strong or weak feature selection). In the literature, the latter parameters are usually determined analytically and are not directly tied to the system’s performance. This lack of a performance-driven mechanism for feature selection means that many approaches fail to accurately capture the most relevant features, ultimately limiting the model's ability to produce high-quality clustering results.\par
In order to address these issues, this paper proposes Deep Matrix Factorization with Adaptive Weights for Multi-view Clustering (DMFAW). The proposed method emphasizes the importance of feature selection for improving clustering results and employs a weighted Deep Semi-NMF to simultaneously generate local partition matrices and select important features. Additionally, the parameter controlling the degree of feature selection is updated dynamically via a method inspired by PI Stepsize Control approach from the Control Theory field \cite{gustafsson1988pi}. Finally, a late fusion approach is applied to obtain a consensus partition matrix from the local partition matrices. Our contributions can be summarized as follows:
\begin{itemize}
    \item We propose DMFAW. A weighted Deep Semi-NMF approach is used for simultaneous generation of local partitions and feature selection, enhancing the multi-view clustering performance.
    \item We introduce a dynamic feature selection parameter update mechanism inspired by Control Theory's PI Stepsize Control, enhancing model stability and adaptability to diverse datasets while accelerating convergence.
    \item We conduct extensive experiments on real-world datasets, validating the effectiveness and efficiency of DMFAW. The results demonstrate better performance compared to other state-of-the-art methods.
\end{itemize}

\section{Related Work}
\subsection{Multi-view Clustering} 
It aims to get a high-quality clustering result by utilizing heterogeneous information from different views. \citet{kumar2011co} propose a Co-training Approach for Multi-View Spectral Clustering, which combines semi-supervised learning and spectral clustering for multi-view data analysis. This approach alternates between self-training, in which the local clusterings mutually update the other views, and label propagation where the updated views are used to re-label the data points which in turn are used to refine the clustering results. Multi-View clustering via Late Fusion Alignment Maximization (MVC-LFA) \cite{wang2019multi} is a framework that uses late fusion to integrate multiple views. It jointly and simultaneously optimizes the consensus representation, transformation matrices and the weight coefficients via maximizing the alignment between consensus and weighted local representations. \citet{chen2021relaxed} propose Multi-View Clustering in Latent Embedding Space (MCLES) which jointly learns a comprehensive latent embedding representation matrix, an accurate cluster indicator matrix and a robust global similarity matrix in a unified framework by seamlessly leveraging the interaction between these matrices. While these methods demonstrate overall good clustering performance and runtime efficiency, they exhibit limitations in capturing hidden, hierarchical relationships within the data, which can be crucial for learning better latent data representations.

\subsection{Deep Matrix Factorization} 
In many instances, the datasets we encounter encompass a variety of distinct features. To address this challenge, the concept of Deep Semi-NMF has emerged \cite{trigeorgis2014deep}. In this framework, a data matrix is factorized into $m + 1$ factors, while imposing a non-negativity constraint on the implicit representations. This constraint extends the interpretability of each layer's representation within this hierarchical structure, allowing for a natural clustering interpretation. Unfortunately, theses method can only handle single-view data.
By combining deep matrix factorization with multi-view horizontal collaboration, Multi-view Clustering via Deep Matrix Factorization (DMF-MVC) \cite{zhao2017multi} learns layer-wise latent representations, with each layer leveraging complementary information from previous layers. Furthermore, a constraint is imposed to ensure that multi-view data shares the same representation following multi-layer factorization. To preserve the geometric structure inherent in each data view, the authors introduce a graph Laplacian as a regularization term. However, it's worth noting that the authors empirically determine view weights. Auto-weighted Multi-View Clustering via Deep Matrix Factorization (Aw-DMVC) \cite{huang2020auto}  addresses this critical challenge in multi-view learning by enabling automatic weight assignment to different views. This adaptive method improves the performance of the proposed approach and enhances its flexibility compared to methods that rely on manually assigned weights. However, Aw-DMVC doesn't incorporate ensemble learning. On the other hand, Multi-View Clustering via Deep Matrix Factorization and Partition Alignment (MVC-DMF-PA) \cite{zhang2021multi} integrates representation learning and the late fusion stage within a framework, allowing them to mutually guide each other towards the generation of the consensus representation matrix. To guide the learning process, partition alignment is used to align the latent representations across different views, ensuring that they represent the same underlying clusters. Different from MVC-DMF-PA, our deep matrix factorization embeds feature selection while dynamically updating the parameter controlling its degree using Control Theory's principles.

\section{Methodology}
In this section, we introduce a novel adaptive framework for multi-view clustering, termed Deep Matrix Factorization with Adaptive Weights for Multi-View Clustering (DMFAW). Our approach, illustrated in Fig. \ref{fig:proposed_approach}, enhances upon existing methods by integrating feature selection alongside a cross-domain Control Theory principle to dynamically update weight parameters. First, we show how DMFAW effectively captures important features for multi-view clustering task. Subsequently, we provide an in-depth explanation of our weights parameter update mechanism utilizing PI stepsize control. Finally, we present a multi-view late-fusion strategy and provide theoretical analysis.

\begin{figure}[t]
    \centering
    \includegraphics[width=1\columnwidth]{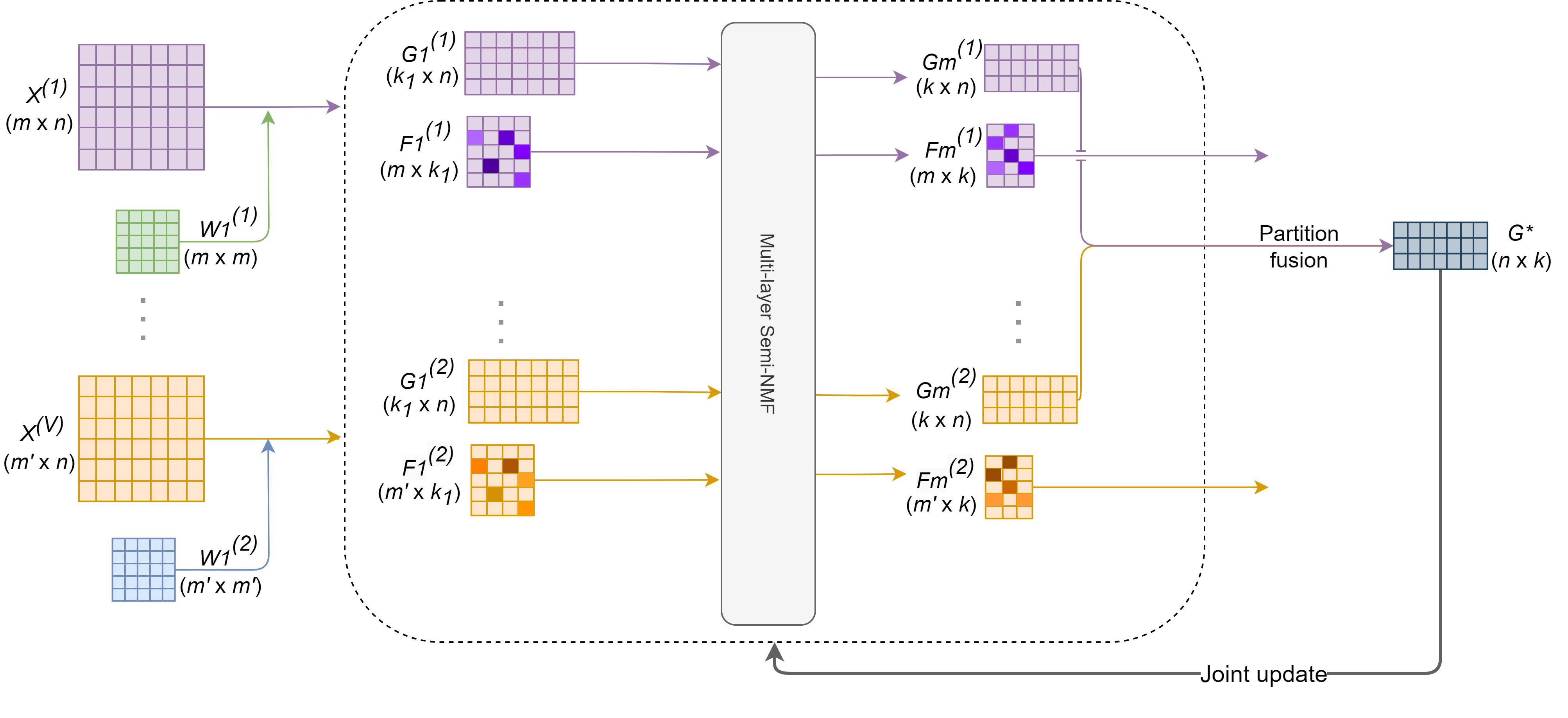}
    \caption{A graphical representation of our proposed solution, DMFAW. The model integrates a weight matrix, denoted as $W^{(v)}$, for feature selection in each view's local clustering phase for each input matrix $X^{(v)}$. Then, a consensus partition matrix is generated based on local partitions $\{G_m^{(v)}\}_{v=1}^V$, permutation matrices $\{M^{(v)}\}_{v=1}^V$ and the average partition region $A$.  Here, $F_i^{(v)}$ and $G_i^{(v)}$ represent the mapping and partition matrices of the $i$-th layer respectively, while $G^*$ represents the consensus partition matrix.}
    \label{fig:proposed_approach}
\end{figure}

\subsection{Weighted Deep Matrix Factorization}
Traditional matrix factorization methods are constrained by their inherent shallowness, limiting their capacity to uncover hierarchical features. They decompose the data matrix $X\in \mathbb{R}^{d\times n}$, comprising $d$ dimensions and $n$ samples, into two factors $F\in \mathbb{R}^{d\times k}$ and $G\in \mathbb{R}^{k\times n}$, representing the mapping and partition matrices, respectively—where $k$ denotes the rank. On the other hand, deep matrix factorization draws inspiration from the successes of deep learning, enabling the extraction of multiple layers of features hierarchically, thus providing novel insights across a wide array of applications \cite{de2021survey}.\newline
Given multi-view data matrices $\{X^{(v)}\}_{v=1}^V$ with $n$ samples, $V$ views and $d_v$ dimensions , deep matrix factorization decomposes each data matrix into $m+1$ factors. Initially, it performs the first factorization $F_1G_1$. Subsequently, in a cascading manner, $G_1$ undergoes further decomposition into $F_2G_2$, and this process iterates until the last partition matrix $G_m$ is obtained. The objective function of multi-view deep matrix factorization is formulated as follows,
\begin{align}
    \min_{F^{(v)}_i, G^{(v)}_i} &\sum_{v=1}^{V} \Vert X^{(v)} - F^{(v)}_1F^{(v)}_2 \cdots F^{(v)}_mG^{(v)}_m \Vert^2_F, \nonumber \\
     & s.t. \hspace{0.1cm} F_i^{(v)} \geq 0, G_i^{(v)} \geq 0 \hspace{0.1cm} i=1,2,\cdots m.
\end{align} 
Additionally, when dealing with unconstrained input data matrices—those that may contain mixed signs—a Semi-NMF approach proves advantageous. In Semi-NMF, only one of the output matrices is constrained to have non-negative values, while the other remains unconstrained \cite{ding2008convex}.\newline
Existing deep matrix factorization approaches tend to treat all data features equally, making them susceptible to the influence of irrelevant or noisy features \cite{wang2010weighted}. To mitigate this, the proposed weighted deep matrix factorization method introduces a feature weighting process to better control feature relevance. It's defined as,
\begin{align}
    \min_{\substack{F_i^{(v)},G_i^{(v)},\\ W^{(v)}}} & \sum_{v=1}^{V} \Vert W^{(v)}(X^{(v)} - F_1^{(v)}F_2^{(v)}\cdots F_m^{(v)}G_m^{(v)})\Vert^2_F, \nonumber \\
    & st. \sum_{d} {(W_{d}^{(v)})}^p = 1, 
\end{align}
where $W^{(v)} \in \mathbb{R}^{{d_v}\times {d_v}}$ is a diagonal matrix indicating the weights of the features of $X^{(v)}$, and $p$, which is introduced in the next section, is a parameter that controls the degree of feature selection. This parameter allows the model to dynamically adjust the influence of different features, enabling performance-driven and effective feature selection, ultimately leading to improved clustering results.

\subsection{Adaptive Feature Selection}
One of the challenging aspects of multi-view clustering is choosing the right parameters, and particularly in our case those controlling feature selection among different views. Existing approaches often rely on empirical or analytical methods to determine these parameters. However, these static approaches may fall short in capturing the dynamic and intricate relationships inherent in multi-view data \cite{wang2010weighted,zhao2017multi,khalafaoui2023joint}. In contrast to these methodologies, we introduce a novel approach inspired by control theory principles to dynamically update the aforementioned parameters.\newline
The integration of control theory techniques, particularly the Proportional-Integral (PI) controller, offers an interesting approach for enhancing adaptability and optimizing model performance \cite{wanner1996solving}. The PI controller is known for its ability to dynamically adjust system parameters based on the integral of past errors and the current error. In the context of machine learning and multi-view clustering, the PI controller becomes an interesting tool to dynamically adjust parameters and steer the model towards optimal solution. \newline
\sloppy At each iteration, the model computes the global loss, representing the disparity between the input data matrix $X^{(v)}$ and the corresponding factorization $F_1^{(v)}F_2^{(v)}\cdots F_m^{(v)}G_m^{(v)}$ and between the consensus $G^*$ and local partition matrices $\{G_m^{(v)}\}_{v=1}^V$. Then, both the past and current losses are evaluated, guiding the adaptive update of the parameter $p$, which controls the degree of feature selection. The iterative process continues, while contributing to the solution convergence and stability.\newline
Following the work related to PI stepsize control \cite{gustafsson1988pi}, which has been shown to enhance the regularity of error estimates, we define our adaptive feature selection parameter term as follows:

\begin{definition}
    Let $p$ be the weight parameter, $closs$ and $ploss$ represent the current and previous computed losses, respectively. $Tol$ is the tolerance for the current loss per iteration. The update rule for $p$ is defined as,
    \begin{equation}\label{eq:update_p}
    p \leftarrow p\cdot \left(\frac{Tol}{|closs|}\right)^{n_1} \left(\frac{|ploss|}{|closs|}\right)^{n_2},
\end{equation}
with $n_1$ and $n_2$ hyperparameters that control the influence of the current loss and the loss update on the weight parameter. $Tol$ is defined as,
\begin{equation}\label{eq:update_tol}
    Tol \leftarrow Tol\cdot \left(1 + \frac{|closs - ploss|}{ploss} \right ).
\end{equation}
\end{definition}

\subsection{Learning the Consensus Partition}
Building upon the methodologies proposed by \cite{zhang2021multi,wang2019multi} for late fusion,  we derive the consensus partition matrix $G^*$ from the local partitions $\{G_m^{(v)}\}_{v=1}^V$ obtained from each individual view. This is achieved by maximizing the alignment between the local partition matrices and the consensus partition matrix through an optimal permutation matrix $M \in \mathbb{R}^{k\times k}$. This permutation matrix unifies the different representations present in each local partition matrix. Additionally, we introduce the matrix $A\in\mathbb{R}^{n\times n}$ which represents the average partition region. The latter helps prevent the consensus partition $G^*$ from deviating from the average partition observed prior to the fusion process.\newline
Eventually, our proposed deep matrix factorization with adaptive weights model can be formulated as,

\begin{align}\label{eq:obj_func}
    \min_{\substack{F_i^{(v)},G_i^{(v)},G^* \\ W^{(v)}, M^{(v)}, \beta^{(v)}}} & \sum_v^V \Vert W^{(v)}(X^{(v)} - F_1^{(v)}F_2^{(v)}\cdots F_m^{(v)}G_m^{(v)}) \Vert^2_F \nonumber \\ 
    & - \lambda \Tr\left (G^*A\sum_v^V\beta^{(v)}G_m^{(v)T}M^{(v)}\right ),   \\
     st.\hspace{0.2cm} G_i^{(v)} \geq 0,& M^{(v)}M^{(v)T} = I_k,  
      \sum_d {(W_d^{(v)})}^p = 1, \beta^{(v)} \geq 0, \nonumber
\end{align}
where $\beta^{(v)}$ is the weighting coefficient of each local partition.

\section{Optimization}
In the following, we derive a six-step alternate optimization algorithm in order to solve Eq. \eqref{eq:obj_func}. Note that, for each view, we need to optimize $F_i^{(v)}$ and $G_i^{(v)}$ layer by layer, i.e., first $F_1^{(v)}$ and $G_1^{(v)}$ until $F_m^{(v)}$ and $G_m^{(v)}$ are updated. Following \cite{ding2013nonnegative,zhang2021multi} we implement a clustering-based initialization, using Semi-NMF, for all the factors $F_i^{(v)}$,$G_i^{(v)}$ in order to mitigate the problem of non-uniqueness of the aforementioned factorization, and expediate the approximation of the variables. 
\paragraph{\textbf{Subproblem of updating $G^*$}}
With $F_i^{(v)}$, $G_i^{(v)}$, $W^{(v)}$, $M^{(v)}$, $\beta^{(v)}$ fixed, the optimization Eq. \eqref{eq:obj_func} can be written as follows,
\begin{equation}\label{eq:opt_cons}
    \min_{G^*} - \Tr(G^*U), st. G^*G^{*T} = I_k,
\end{equation}
where $U = A\sum_v^V\beta^{(v)}G_m^{(v)T}M^{(v)}$. This problem can be solved by taking the singular value decomposition (SVD) of the given matrix $U$. Furthermore, there exists a closed-form solution, which is provided by the following Theorem.
\begin{theorem}\label{th:cf-solution}
    If the matrix $U$, defined previously, has an economic rank-$k$ singular value decomposition form, then the optimization problem in Eq. \ref{eq:opt_cons} has a closed-form solution defined as,
    \begin{equation}\label{eq:closed-form}
        G^* = VS^T,
    \end{equation}
    where $V \in \mathbb{R}^{k\times k}$ and $S \in \mathbb{R}^{n\times k}$ are the right and left singular vectors respectively.
\end{theorem}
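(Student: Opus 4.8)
The plan is to reformulate the trace-maximization problem in Eq.~\eqref{eq:opt_cons} using the singular value decomposition $U = S\Sigma V^T$, where $S \in \mathbb{R}^{n\times k}$ and $V \in \mathbb{R}^{k\times k}$ have orthonormal columns and $\Sigma \in \mathbb{R}^{k\times k}$ is diagonal with nonnegative entries $\sigma_1,\dots,\sigma_k$. Substituting this into $\Tr(G^*U)$ and using the cyclic property of the trace, I would rewrite the objective as $\Tr(G^*S\Sigma V^T) = \Tr(V^T G^* S \Sigma)$. Setting $Z = V^T G^* S \in \mathbb{R}^{k\times k}$, the goal becomes maximizing $\Tr(Z\Sigma) = \sum_{i=1}^k \sigma_i Z_{ii}$ subject to the constraint inherited from $G^*G^{*T} = I_k$.

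The key step is to show that the feasible $Z$ are exactly (a subset of) the matrices with $\|Z\|_2 \le 1$, equivalently that every diagonal entry satisfies $|Z_{ii}| \le 1$. This follows because $G^*G^{*T} = I_k$ means the rows of $G^*$ are orthonormal, so $G^*$ has all singular values equal to $1$; since $S$ and $V$ have orthonormal columns, $Z = V^T G^* S$ is a product of matrices each with spectral norm at most $1$, hence $\|Z\|_2 \le 1$ and in particular $|Z_{ii}| \le 1$ for every $i$. Therefore $\Tr(Z\Sigma) = \sum_i \sigma_i Z_{ii} \le \sum_i \sigma_i$, and this upper bound is attained iff $Z_{ii} = 1$ for all $i$ with $\sigma_i > 0$; choosing $Z = I_k$ works and corresponds to $V^T G^* S = I_k$, i.e. $G^* S = V$. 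Right-multiplying by $S^T$ and using that the economic rank-$k$ SVD gives $S^T S = I_k$ (so $SS^T$ acts as the identity on the relevant subspace), one recovers $G^* = V S^T$, which one checks is feasible since $(VS^T)(VS^T)^T = V S^T S V^T = V V^T = I_k$.

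The main obstacle is being careful about the orientation of the SVD factors and the dimensions: the statement writes $G^* = VS^T$ with $V$ the "right" and $S$ the "left" singular vectors of $U$, so I must make sure the convention $U = S\Sigma V^T$ is used consistently throughout (note $U$ is $n\times k$ here, not square), and that the economic rank-$k$ assumption guarantees $\Sigma$ is invertible so that the maximizer is genuinely unique up to the null directions. A secondary subtlety is that the constraint set for $Z$ is not all of the spectral-norm ball — it is the image of the Stiefel-type manifold $\{G^* : G^*G^{*T}=I_k\}$ under the linear map $G^* \mapsto V^T G^* S$ — but since the candidate $Z = I_k$ is achieved by a legitimate $G^*$ and still attains the norm-ball upper bound, optimality over the smaller set follows immediately. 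I would close by verifying the objective value $-\Tr(G^*U) = -\sum_i \sigma_i$ is indeed the minimum and noting this matches the claimed closed form.
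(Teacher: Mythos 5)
Your proposal is correct and follows essentially the same route as the paper: take the SVD $U = S\Sigma V^T$, use orthogonality to reduce the objective to a trace against the diagonal factor, and read off $G^* = VS^T$. In fact your write-up supplies the step the paper glosses over --- the von Neumann-type bound $|Z_{ii}|\le 1$ for $Z = V^T G^* S$, which gives $\Tr(G^*U)\le\sum_i\sigma_i$ with equality at the claimed feasible point --- so it is a more complete version of the same argument.
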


\begin{proof}
    The matrix $U$ can be expressed in terms of its singular value decomposition as $U = SDV^T$. We can then rewrite Eq. \eqref{eq:opt_cons} as follows,
    \begin{equation}
        \min_{G^*} - \Tr(G^*SDV^T), st. G^*G^{*T} = I_k.
    \end{equation}
    Since $S$ and $V$ are orthogonal matrices, the optimization problem is equivalent to,
    \begin{equation}
        \min_{G^*} - \Tr(G^*D), st. G^*G^{*T} = I_k.
    \end{equation}
    Utilizing the orthogonality constraint and the properties of orthogonal matrices, a closed-form solution for $G^*$ exists and is defined in Eq. \eqref{eq:closed-form}. This completes the proof.
\end{proof}

\paragraph{\textbf{Subproblem of updating \textbf{$F_i^{(v)}$}}}
With $G_i^{(v)}$, $W^{(v)}$, $M^{(v)}$, $G^*$, $\beta^{(v)}$ fixed, the optimization problem in Eq. \eqref{eq:obj_func} is equivalent to,
\begin{equation}
    \min_{F_i^{(v)}} \mathcal{C} = \min_{F_i^{(v)}} \Vert X^{(v)} -  Z^{(v)}F_i^{(v)}G_i^{(v)} \Vert^2_W,
\end{equation}
where $Z = F_1^{(v)}\cdots F_{i-1}^{(v)}$. Setting $\partial C/\partial F_i^{(v)}=0$, we get the following solution
\begin{equation}\label{eq:update_F}
    F_i^{(v)} = Z^\dag X^{(v)} G_i^{(v)\dag},
\end{equation}
where $\dag$ represents the Moore-Penrose pseudo-inverse.

\paragraph{\textbf{Subproblem of updating $G_i^{(v)} (i<m)$}}
With $F_i^{(v)}$, $W^{(v)}$, $M^{(v)}$, $G^*$, $\beta^{(v)}$ fixed, the optimization problem in Eq. \eqref{eq:obj_func} can be written as follows,
\begin{equation}
    \min_{G_i^{(v)}} \mathcal{C} = \min_{G_i^{(v)}} \Vert X^{(v)} -  ZF_i^{(v)}G_i^{(v)} \Vert^2_W.
\end{equation}
Following \cite{zhao2017multi}, the update rule for $G_i^{(v)} (i<m)$ is defined as,
\begin{equation}\label{eq:update_gi}
    G_i^{(v)} \leftarrow G_i^{(v)} \circ \sqrt{\frac{[Z^TW^{(v)}X^{(v)}]^+ + [Z^TW^{(v)}ZG_i^{(v)}]^-}{[Z^TW^{(v)}X^{(v)}]^- + [Z^TW^{(v)}ZG_i^{(v)}]^+}},
\end{equation}
where $[A]^+ = (|A| + A)/2$ and $[A]^- = (|A| - A)/2$ are element-wise operations.

\paragraph{\textbf{Subproblem of updating $G_m^{(v)}$}}
With $F_i^{(v)}$, $W^{(v)}$, $M^{(v)}$, $G_i^{(v)} (i<m)$, $G^*$, $\beta^{(v)}$ fixed, the optimization problem in Eq. \eqref{eq:obj_func} is defined as,
\begin{equation}
    \min_{G_m^{(v)}} \Vert X^{(v)} -  ZF_m^{(v)}G_m^{(v)} \Vert^2_W -\lambda \beta^{(v)}\Tr(G^*AG^{(v)T}_mM^{(v)}).
\end{equation}
The update formula of $G_m^{(v)}$ is written as follows,
\begin{equation}\label{eq:update_gm}
\begin{gathered}
    G_m^{(v)} \leftarrow G_m^{(v)} \circ \sqrt{\mathcal{U}_n/\mathcal{U}_d}, \\
    \scalebox{0.9}{$
        \displaystyle
        \mathcal{U}_n = [Z^TW^{(v)}X^{(v)}]^++[Z^TW^{(v)}ZG_i^{(v)}]^-+\lambda\beta^{(v)}[M^{(v)}G^*A]^+$}, \\
    \scalebox{0.9}{$
        \displaystyle
        \mathcal{U}_d = [Z^TW^{(v)}X^{(v)}]^-+[Z^TW^{(v)}ZG_i^{(v)}]^++\lambda\beta^{(v)}[M^{(v)}G^*A]^-$}. 
\end{gathered}
\end{equation}

\begin{theorem}\label{th:convergence}
The solution of the update rule in Eq. \eqref{eq:update_gm} satisfies the KKT conditions \cite{kuhn2013nonlinear} and holds convergence property.
\end{theorem}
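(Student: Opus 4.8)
The plan is to follow the standard majorization–minimization (auxiliary function) argument for semi-NMF multiplicative updates, as in \cite{ding2013nonnegative,zhao2017multi}, adapted to the extra linear term produced by the fusion regularizer. Writing $B := ZF_m^{(v)}$, the subproblem objective and its gradient are
$$\mathcal{J}(G_m^{(v)}) = \big\|W^{(v)}(X^{(v)}-BG_m^{(v)})\big\|_F^2 - \lambda\beta^{(v)}\Tr\!\big(G^{*}AG_m^{(v)T}M^{(v)}\big),$$
$$\nabla_{G_m^{(v)}}\mathcal{J} = 2 B^{T}W^{(v)T}W^{(v)}\big(BG_m^{(v)}-X^{(v)}\big) - \lambda\beta^{(v)}M^{(v)}G^{*}A.$$
First I would attach a Lagrange multiplier matrix $\Phi$ to the constraint $G_m^{(v)}\ge 0$ and write the KKT system: stationarity $\nabla_{G_m^{(v)}}\mathcal{J}-\Phi=0$, primal feasibility $G_m^{(v)}\ge 0$, dual feasibility $\Phi\ge 0$, and complementary slackness $\Phi\circ G_m^{(v)}=0$. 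Eliminating $\Phi$ gives the single entrywise fixed-point condition $\big(\nabla_{G_m^{(v)}}\mathcal{J}\big)\circ G_m^{(v)}=0$ against which the update will be matched.

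Next I would verify that any fixed point of Eq.~\eqref{eq:update_gm} satisfies these KKT conditions. Splitting each of the three matrix products appearing in $\mathcal{U}_n,\mathcal{U}_d$ via $[P]^{+}-[P]^{-}=P$, one checks that $\mathcal{U}_n-\mathcal{U}_d$ equals, up to a positive scalar, $-\nabla_{G_m^{(v)}}\mathcal{J}$. Hence at a fixed point, for every index either $(G_m^{(v)})_{ij}=0$ or $(\mathcal{U}_n)_{ij}=(\mathcal{U}_d)_{ij}$, i.e. $(\nabla_{G_m^{(v)}}\mathcal{J})_{ij}=0$; in either case $(\nabla_{G_m^{(v)}}\mathcal{J})_{ij}(G_m^{(v)})_{ij}=0$, which is precisely stationarity together with complementary slackness, and dual feasibility ($\Phi\ge0$ at the zero entries) follows because the denominator dominates the numerator there. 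This settles the KKT part, so a stationary point of the update is a KKT point of the subproblem.

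For the convergence (monotone decrease) part I would construct an auxiliary function $\mathcal{Z}(G,\tilde G)$ satisfying $\mathcal{Z}(\tilde G,\tilde G)=\mathcal{J}(\tilde G)$ and $\mathcal{Z}(G,\tilde G)\ge \mathcal{J}(G)$ for all feasible $G$, and show that $\arg\min_G \mathcal{Z}(G,\tilde G)$ — obtained by zeroing the separable entrywise derivative — reproduces exactly Eq.~\eqref{eq:update_gm}. The construction treats the three contributions separately: the genuinely positive-semidefinite quadratic piece $B^{T}W^{(v)T}W^{(v)}B$ is majorized by the usual diagonal bound of the form $\sum_{ij}\frac{[\cdot]^{+}_{ij}G_{ij}^{2}}{\tilde G_{ij}}$; the indefinite cross terms are controlled by combining the $[\cdot]^{\pm}$ split with the inequality $z\ge 1+\log z$ for $z>0$; and the linear fusion term $-\lambda\beta^{(v)}\Tr(G^{*}AG_m^{T}M^{(v)})$ is handled by a quadratic majorizer on its $[M^{(v)}G^{*}A]^{+}$ part and the $\log$-bound on its $[M^{(v)}G^{*}A]^{-}$ part, which is exactly why these two matrices appear in $\mathcal{U}_d$ and $\mathcal{U}_n$ respectively. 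Then $\mathcal{J}(G^{t+1})\le\mathcal{Z}(G^{t+1},G^{t})\le\mathcal{Z}(G^{t},G^{t})=\mathcal{J}(G^{t})$, so the objective is non-increasing; since $\mathcal{J}$ is coercive in $G_m^{(v)}$ (the reconstruction term dominates the linear term because $M^{(v)}$ is orthogonal and all other factors are fixed) it is bounded below on the feasible set, hence the sequence of objective values converges. I expect the main obstacle to be the explicit construction and verification of the auxiliary function for the indefinite quadratic form together with the linear trace term — ensuring the majorizer is tight at $\tilde G$, separable across entries, and that its unique stationary point is precisely the multiplicative quotient $\sqrt{\mathcal{U}_n/\mathcal{U}_d}$; the KKT bookkeeping and the boundedness argument are comparatively routine.
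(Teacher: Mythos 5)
Your KKT portion is essentially the paper's own argument: the paper forms the Lagrangian with a multiplier $\eta$ for $G_m^{(v)}\ge 0$, writes the complementary slackness condition $\bigl(2Z^TW(ZG_m^{(v)}-X^{(v)})-\lambda\beta^{(v)}M^{(v)}G^*A\bigr)G_m^{(v)}=0$, and then shows—exactly as you do via the identity $[P]^+-[P]^-=P$ applied to $\mathcal{U}_n-\mathcal{U}_d$—that a fixed point of the multiplicative update satisfies the same equation with $G_m^{(v)}$ replaced by $(G_m^{(v)})^2$, which is equivalent since an entry vanishes iff its square does. Where you genuinely diverge is the second half: the paper stops at this fixed-point equivalence and offers no monotone-decrease argument at all, whereas you propose the full majorization--minimization construction (diagonal quadratic bound for the PSD piece, the $z\ge 1+\log z$ bound for the negative parts, and a split majorizer for the linear fusion term) to justify $\mathcal{J}(G^{t+1})\le\mathcal{J}(G^t)$. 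That addition is exactly what would be needed to substantiate the "convergence property" claimed in the theorem statement, and it is the standard route in \cite{ding2013nonnegative,zhao2017multi}; but as written it is a plan rather than a proof, and you correctly identify the auxiliary-function verification as the nontrivial step. One caution on the part you do argue in detail: your claim that dual feasibility "follows because the denominator dominates the numerator" at zero entries does not hold. A zero entry of $G_m^{(v)}$ remains zero under the multiplicative update regardless of the sign of the gradient there, so the fixed-point property gives you complementary slackness but says nothing about $\Phi_{ij}\ge 0$ on the active set. This is a known limitation of multiplicative-update KKT arguments and the paper's proof silently shares it; you should either drop the dual-feasibility claim or state explicitly that only stationarity-plus-complementarity is established.
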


\begin{proof}
    We define the Lagrangian function as follows,
    \begin{align}
        \mathcal{L}(G_m^{(v)}) & = \sum_v^V  \Vert W^{(v)}(X^{(v)} - F_1^{(v)}F_2^{(v)}\cdots F_m^{(v)}G_m^{(v)}) \Vert^2_F \nonumber \\
    & - \lambda \Tr (G^*A\beta^{(v)}G_m^{(v)T}M^{(v)}) -\eta G_m^{(v)},
    \end{align}
    where $\eta$ is a Lagrange multiplier. The complementary slackness condition gives,
\begin{equation}\label{eq:comp_slack}
    \frac{\partial \mathcal{L}(G_m^{(v)})}{\partial G_m^{(v)}} = (2Z^TW(ZG_m^{(v)} - X^{(v)}) - \lambda \beta^{(v)} M^{(v)} G^*A)G_m^{(v)} = \eta G_m^{(v)} = 0.
\end{equation}
This is a fixed point equation that the solution must satisfy at convergence. Moreover, the solution of Eq. \eqref{eq:update_gm} satisfies the fixed point equation. Let $\overline{G_m^{(v)}}$ be the alternatively updated $G_m^{(v)}$ at any iteration $t$. At convergence, $\overline{G_m^{(v)}} = G_m^{(v)}$, that is,
\begin{equation}\label{eq:new_gm_update}
    G_m^{(v)} \leftarrow G_m^{(v)} \circ \sqrt{\mathcal{U}_n/\mathcal{U}_d},
\end{equation}
where $\mathcal{U}_n$ and $\mathcal{U}_d$ are defined in Eq. \eqref{eq:update_gm}. Using $[A]^+ = (|A| + A)/2$ and $[A]^- = (|A| - A)/2$, Eq. \eqref{eq:new_gm_update} reduces to,
\begin{equation}\label{eq:comp_slack_2}
    (2Z^TW(ZG_m^{(v)} - X^{(v)}) - \lambda \beta^{(v)} M^{(v)} G^*A)(G_m^{(v)})^2 = 0.
\end{equation}
Note that both Eq. \eqref{eq:comp_slack_2} and Eq. \eqref{eq:comp_slack} are identical and share the same factor. Additionally, if $G_m^{(v)}=0$ then $(G_m^{(v)})^2=0$ as well. Therefore if Eq. \eqref{eq:comp_slack} holds, then Eq. \eqref{eq:comp_slack_2} holds as well and inversely.
\end{proof}

\paragraph{\textbf{Subproblem of updating $W^{(v)}$}}
Optimizing Eq. \eqref{eq:obj_func} with respect to $W^{(v)}$ and its constraint is equivalent to optimizing,

\begin{align}
    \mathcal{C} & = \sum_i W_i^{(v)}u_i - \theta(\sum_i {(W_i^{(v)})}^p - 1), \nonumber \\
    & s.t. \hspace{0.1cm} u_i = \sum_j (X^{(v)}-ZF_m^{(v)}G_m^{(v)})^2_{ij}.
\end{align}
Setting $\frac{\partial \mathcal{C}}{\partial W_i} = 0$, and using the KKT complementary slackness condition, we get the following updating formula,
\begin{equation}\label{eq:update_W}
    W_i^{(v)} = \left [\frac{1}{\sum u_i^{\frac{p}{p - 1}}} \right ]^{\frac{1}{p}} u_i^{\frac{1}{p -1}}.
\end{equation}

\paragraph{\textbf{Subproblem of updating $M^{(v)}$}}
With $F_i^{(v)}$, $G_i^{(v)}$, $W^{(v)}$, $G^*$, $\beta^{(v)}$ fixed, the optimization Eq. \eqref{eq:obj_func} can be written as follows,
\begin{equation}\label{eq:opt_align}
    \min_{G^*} - \Tr(M^{(v)}U), s.t. M^{(v)}M^{(v)T} = I_k,
\end{equation}
where $U = \beta^{(v)}G_m^{(v)}A^TG^{*T}$. The problem in Eq. \eqref{eq:opt_align} could also be solved by taking the singular value decomposition of $U$. Moreover, according to Theorem \ref{th:cf-solution}, this optimization problem has a closed-form solution.
\paragraph{\textbf{Subproblem of updating $\beta^{(v)}$}}
With $F_i^{(v)}$, $G_i^{(v)}$, $W^{(v)}$, $G^*$, $M^{(v)}$ fixed, the optimization Eq. \eqref{eq:obj_func} can be written as follows,
\begin{equation}\label{eq:opt_beta}
    \max_{\beta^{(v)}} \beta^{(v)}\omega \hspace{0.3cm} s.t.\Vert \beta^{(v)} \Vert_2 = 1, s.t. \hspace{0.1cm} \beta^{(v)} \geq 0,
\end{equation}
where $\omega = \Tr(G_m^{(v)T}M^{(v)}G^*A)$. This problem could be solved with a closed-form solution as follows,
\begin{equation}
    \beta^{(v)} = \omega/\sqrt{\sum \omega^2}.
\end{equation}

\begin{algorithm}[H]
    \caption{Deep Matrix Factorization with Adaptive Weights for Multi-View Clustering (DMFAW)}
    \label{alg:algorithm}
    \textbf{Input}: $\{X^{(v)}\}_{v=1}^V$: set of multi-view data matrices\\
    $\lambda$: balancing parameter for local and consensus losses\\
    $Tol$: Initial value of Tolerance\\
    Initialize $F_i^{(v)}$, $G_i^{(v)}$, $M^{(v)}$, $\beta^{(v)}$
    \begin{algorithmic}[1] 
        \WHILE{not converged}
        \STATE update $G^*$ by solving Eq. \eqref{eq:opt_cons}
        \FOR{ $v \leq V$}
            \STATE update $W^{(v)}$ using Eq. \eqref{eq:update_W} \\
            \FOR{ $i \leq m$}
                \STATE update $F_i{(v)}$ using Eq. \eqref{eq:update_F} \\
                \STATE update $G_i{(v)}$ using Eq. \eqref{eq:update_gi}
            \ENDFOR
            \STATE update $G_m^{(v)}$ using Eq. \eqref{eq:update_gm} \\
            \STATE update $M^{(v)}$ by solving Eq. \eqref{eq:opt_align} \\
            \STATE update $\beta^{(v)}$ using Eq. \eqref{eq:opt_beta} \\
            \STATE update $p$ using Eq. \eqref{eq:update_p}
        \ENDFOR
        \ENDWHILE
        \STATE \textbf{return} Consensus partition matrix $G^*$ to which we apply K-means to obtain clustering assignment results.
    \end{algorithmic}
\end{algorithm}

\subsection{Discussion}
\paragraph{\textbf{Weight Parameter}} It is important to note that in Eq. \eqref{eq:update_W}, by dynamically adjusting $p$, we can control the degree of feature selection. A smaller $p$ leads to stronger feature selection (highlighting important features), while a larger $p$ results in weaker feature selection (treating all features more equally). This adaptability, provided by Eq. \eqref{eq:update_p} is crucial because different datasets may require different levels of feature selection. For example, in some cases, emphasizing only the most critical features can lead to better clustering, while in others, a more balanced consideration of all features might be preferable.
\paragraph{\textbf{Computational Complexity}}
The proposed algorithm is composed of two stages, which are analyzed separately. To simplify the analysis, we assume that all the layers have the same dimensions $l$. All the data views have the same features $d$, $t$ the number of iterations for both stages, $V$ the number of views and $m$ the number of layers. The complexity of pre-training and fine-tuning stages is $\mathcal{O}(Vmt(dnl + nl^2 + ld^2 + ln^2 + dn^2 + n^2))$ and $\mathcal{O}(Vmt(dnl + nl^2 + dl^2 + nk^2 +  kn^2))$ respectively. Since $l \leq d$ and $k < n$, the time complexity of DMFAW is $\mathcal{O}(Vmt(dnl + ld^2 + dn^2)) + \mathcal{O}(Vmt(dnl + nl^2 + dl^2 + kn^2))$.

\section{Experiments}
\subsection{Experimental setup}
\paragraph{\textbf{Datasets}} We used six benchmark multi-view datasets to assess the performance of our proposed method, \textit{i.e.,} Caltech101-all and Caltech101-7 \cite{fei2007learning}, BBC, BBCSport \cite{greene2006practical}, Handwritten\cite{misc_multiple_features_72}, ORL\cite{samaria1994parameterisation}. The details about these datasets are listed in Table \ref{tab:dataset-details}.
\paragraph{\textbf{Compared methods}} DMFAW is compared with two co-training methods \textbf{Co-reg} \cite{kumar2011cor} and \textbf{Co-train} \cite{kumar2011cot}, and seven matrix decomposition models \textbf{MultiNMF} \cite{liu2013multi}, \textbf{DMVC} \cite{zhao2017multi}, \textbf{MVCF} \cite{zhan2018adaptive}, \textbf{ScaMVC} \cite{huang2018self}, \textbf{AwDMVC} \cite{huang2020auto}, \textbf{MVC-DMF-PA} \cite{zhang2021multi} and \textbf{MCDS} \cite{wang2023multi}.
\paragraph{\textbf{Metrics}} Since ground truth is available for the chosen datasets, we assess the effectiveness of our approach using widely adopted external measures, namely the Purity score, Normalized Mutual Information (NMI) and clustering Accuracy (ACC). These metrics are commonly used for cluster validity evaluation, where higher values signify superior clustering performance.
\paragraph{\textbf{Implementation details}} In our implementation, we initialize the contribution of all local partitions to the consensus partition generation by setting $\beta^{(v)}=1/\sqrt{V}$. The alignment matrix is initially set as $W^{(v)}= I_k$. $Tol$ is initialized to $10^{-3}$. Furthermore, we normalize the multi-view data in all experiments. It is assumed that the true number of clusters $k$ is known and matches the actual number of classes in the datasets. Inspired by the approach in \cite{zhang2021multi}, we adopt a three-layer architecture for all experiments, where the number of components is determined by $[k_1, k_2, k]$, with $k_1$ and $k_2$ chosen from $[8k, 10k, 12k]$ and $[4k, 5k, 6k]$ respectively. To enhance robustness, each experiment is repeated 50 times, mitigating the impact of random initialization in K-means, and the best result is reported.

\begin{table*}[t]
\centering
\resizebox{\textwidth}{!}{%
\begin{tabular}{@{}lcccccccclc@{}}
\toprule
Datasets     & Co-reg & Co-train & MultiNMF & MVCF  & DMVC  & ScaMVC & AwDMVC & MVC-DMF-PA & MCDS  & Ours  \\ \midrule
\multicolumn{11}{c}{ACC(\%)}                                                                               \\ \midrule
Handwritten  & 82.04  & 80.15    & 78.54    & 10.05 & 38.70 & 75.20  & 28.75  & 86.90      & 89.85 & \textbf{90.10} \\
Caltech101-7 & 11.06  & 4.00     & 35.73    & 38.25 & 31.03 & 34.12  & 41.16  & 42.39      & 50.43 & \textbf{51.45} \\
Caltech101-all & 26.37  & 19.88    & 25.76    & 11.75 & 14.89 & 11.60  & 23.86  & 31.73      & 50.40 & \textbf{53.32} \\
BBCSport     & 29.62  & 39.18    & 57.51    & 63.24 & 43.81 & 43.67  & 70.76  & 89.75      & 92.51 & \textbf{96.70} \\
BBC          & 40.61  & 32.71    & 48.26    & 65.75 & 49.48 & 51.95  & 65.04  & 76.16      & 79.23 & \textbf{82.18 }\\
ORL          & 83.25  & 72.50    & 23.75    & 66.50 & 77.00 & 61.75  & 12.00  & 86.75      & 87.20 & \textbf{87.23} \\ \midrule
\multicolumn{11}{c}{Purity(\%)}                                                                            \\ \midrule
Handwritten  & 82.58  & 80.92    & 79.81    & 20.00 & 38.60 & 75.20  & 53.45  & 86.90      & 90.03 & \textbf{90.15} \\
Caltech101-7 & 78.70  & 82.56    & 36.02    & 40.38 & 71.56 & 76.37  & 83.22  & 83.31      & 81.90 & \textbf{87.32} \\
Caltech101-all & 17.30  & 11.15    & 20.17    & 15.40 & 23.67 & 25.20  & 19.52  & 36.31      & 51.49 & \textbf{56.50} \\
BBCSport     & 36.31  & 43.68    & 59.23    & 63.42 & 51.36 & 44.26  & 65.99  & 89.75      & 92.51 & \textbf{96.70} \\
BBC          & 34.24  & 33.15    & 48.25    & 65.84 & 48.38 & 52.56  & 77.55  & 76.16      & 79.23 & \textbf{82.18} \\
ORL          & 85.00  & 76.68    & 23.75    & 68.50 & 79.75 & 66.00  & 12.00  & 87.75      & 88.24 & \textbf{88.25} \\ \midrule
\multicolumn{11}{c}{NMI(\%)}                                                                               \\ \midrule
Handwritten  & 76.26  & 76.59    & 74.64    & 0.45  & 38.65 & 75.64  & 62.93  & 76.58      & 79.45 & \textbf{80.66} \\
Caltech101-7 & 43.33  & 47.30    & 40.01    & 22.84 & 32.05 & 38.54  & 40.25  & 40.97      & \textbf{54.69} & 49.50 \\
Caltech101-all & 33.12  & 39.60    & 41.05    & 23.04 & 25.06 & 35.40  & 37.10  & 38.96      & 48.55 & \textbf{48.67} \\
BBCSport     & 13.18  & 16.48    & 37.96    & 40.45 & 26.04 & 20.36  & 46.82  & 78.80      & 84.60  & \textbf{89.46} \\
BBC          & 11.28  & 10.94    & 27.37    & 42.80 & 20.16 & 20.18  & 45.74  & 51.97      & 57.78 & \textbf{63.70} \\
ORL          & 91.06  & 86.61    & 37.98    & 81.02 & 88.00 & 78.92  & 43.43  & 90.74      & 91.30  & \textbf{91.87} \\ \bottomrule
\end{tabular}%
}
\caption{Accuracy, Purity and NMI comparison of different clustering algorithms on six benchmark data sets. The best results are in bold.}
\label{tab:baseline-comparison}
\end{table*}

\begin{table}[t]
\centering
\resizebox{0.5\columnwidth}{!}{%
\begin{tabular}{@{}lrrr@{}}
\toprule
Dataset      & \#Views & \#Samples & \#Clusters \\ \midrule
Handwritten  & 2       & 2000      & 10         \\
Caltech101-7 & 6       & 1474      & 7          \\
Caltech101-all & 6     & 9144      & 102        \\
BBCSport     & 2       & 544       & 5          \\
BBC          & 4       & 685       & 5          \\
ORL          & 3       & 400       & 40         \\ \bottomrule
\end{tabular}%
}
\caption{Datasets used in our experiments}
\label{tab:dataset-details}
\end{table}

\subsection{Clustering results}
The clustering performance of DMFAW is compared with baseline methods, and the results are presented in Table \ref{tab:baseline-comparison}, where the best-performing results are highlighted in bold. Notably, our proposed method consistently outperforms the baselines across all the six datasets, validating the effectiveness of DMFAW. Particularly noteworthy is its substantial improvement on the BBCSport and BBC datasets compared to existing methods. The improvements for the BBCSport dataset are $4.19\%$ in purity and ACC, and $4.86\%$ in NMI, surpassing the second-best results. Similarly, for the BBC dataset, the gains are $2.95\%$ in purity and ACC, and $5.92\%$ in NMI when compared to the second-best results.\newline
Moreover, when compared with other methods utilizing the deep semi-NMF framework, namely MCDS, DMVC, AwD-MVC, and MVC-DMF-PA, our approach consistently achieves superior results. The use of a weight matrix for effective feature selection, in conjunction with a dynamically updated weight parameter, enables our method to distinguish critical features while adjusting the degree of feature selection based on model performance. This validates the robustness and effectiveness of our proposed DMFAW model in capturing the important features for improved clustering performance.\newline
In summary, the presented quantitative results confirm the effectiveness of our proposed DMFAW in comparison to other state-of-the-art methodologies. Notably, using a dynamic feature selection strategy, via a weight feature matrix improves consensus assignment results.
\begin{figure}[t]
    \centering
    \includegraphics[width=\columnwidth]{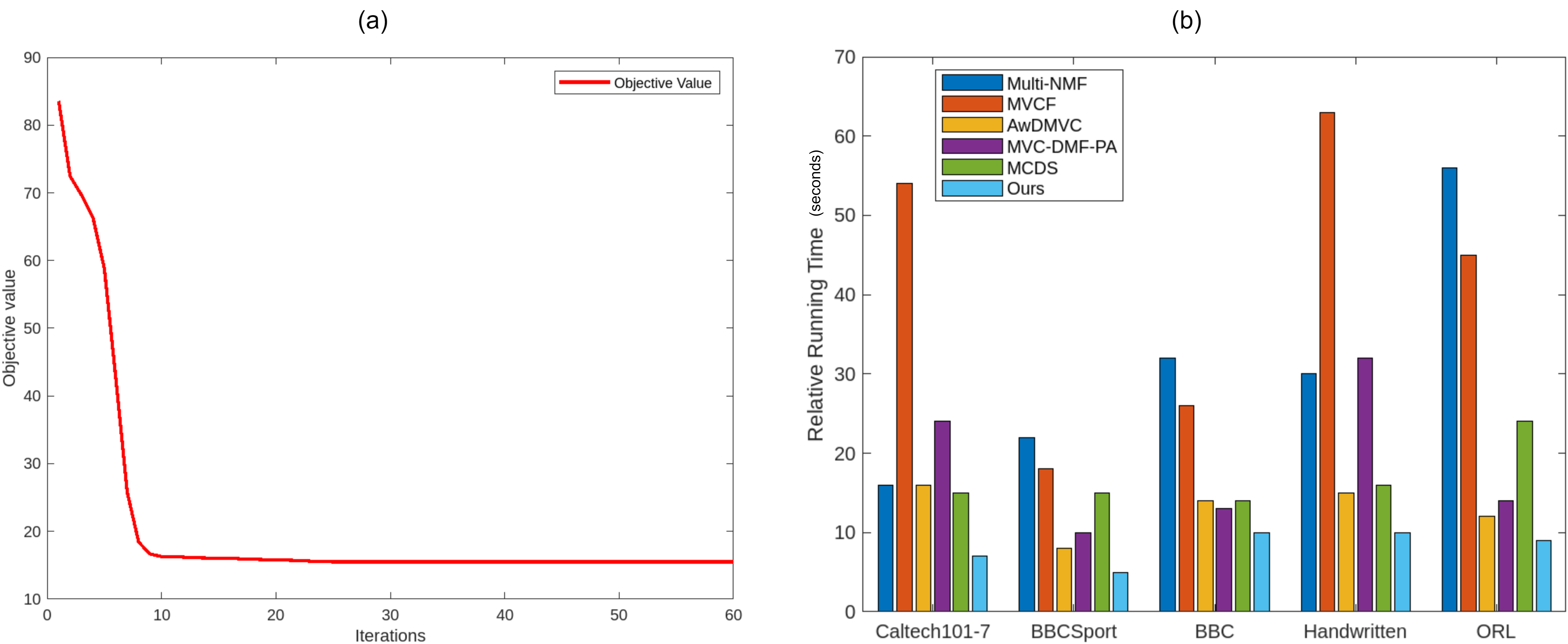}
    \caption{(a) Evolution of the objective value across iterations for Caltech101-7 Dataset. (b) Runtime in seconds, comparing our method to other baseline methods.}
    \label{fig:runtime_convergence}
\end{figure}

\begin{figure*}[]
    \centering
    \includegraphics[width=\textwidth]{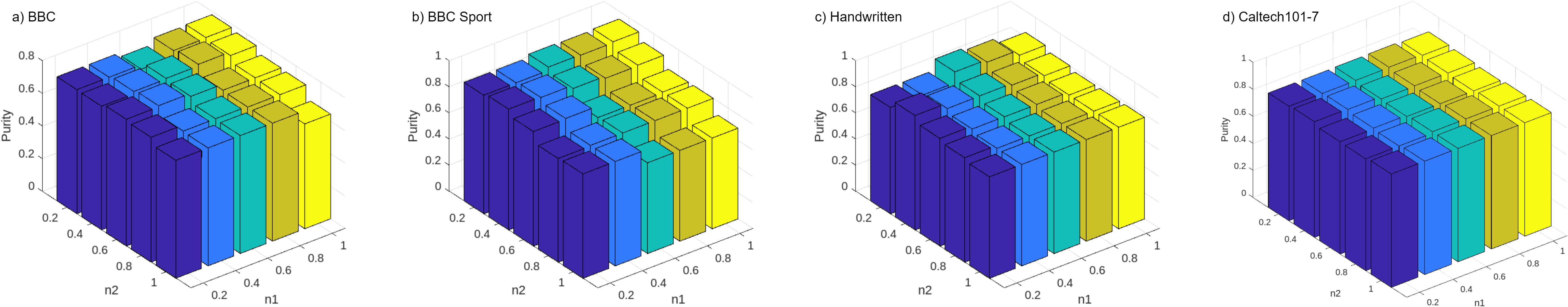}
    \caption{Sensitivity and clustering performance with different parameter settings on four datasets.}
    \label{fig:param_sensitivity}
\end{figure*}

\subsection{Convergence and parameter sensitivity analysis}
\paragraph{\textbf{Convergence Analysis}} We theoretically showed in Theorem \ref{th:convergence} that the updating of $G_m^{(v)}$ satisfies KKT conditions. To experimentally validate the convergence of the entire model, we conducted experiments using the Caltech101-7 dataset, setting hyperparameters to $\lambda = 16$. The evolution of the objective value across iterations is depicted in Figure \ref{fig:runtime_convergence}-a. Notably, the plot illustrates that DMFAW is monotonically decreasing, demonstrating consistent convergence. Moreover, convergence is achieved in fewer than $10$ iterations, underscoring the efficiency of our proposed method, based on PI stepsize control weight parameter update, in accelerating convergence. This property is further investigated in run time experimentation.
\paragraph{\textbf{Run Time}} Figure \ref{fig:runtime_convergence}-b shows that the proposed algorithm demonstrates superior performance in terms of run time, recorded in seconds, compared to other deep matrix factorization methods. This significant reduction in run time can be attributed to our proposed weighted deep matrix factorization combined with the dynamic update of the feature selection degree.
\paragraph{\textbf{Parameter sensitivity}} We conducted a parameter sensitivity study on multiple datasets by varying $n_1$ and $n_2$ across the values $[0.2, 0.4, 0.6, 0.8, 1]$. We aimed to understand how these parameters impact the purity scores, and report the findings in Figure \ref{fig:param_sensitivity}. Notably, our experiments consistently show that the purity scores remain stable across all combinations of $n_1$ and $n_2$. This indicates that even though we introduced two hyperparameters in Eq. \eqref{eq:update_p}, their influence on clustering results is minimal. Therefore, it is reasonable to treat both $n_1$ and $n_2$ as constants, which can be fixed to specific values across all datasets without significantly affecting the performance. In our case, setting $n_1=1$ and $n_2=0.2$ yields consistent and reliable clustering results.

\begin{table}[t]
\centering

\resizebox{0.7\textwidth}{!}{%
\begin{tabular}{@{}lclcl@{}}
\toprule
Datasets & \multicolumn{2}{c}{Dynamic} & \multicolumn{2}{c}{Fixed} \\ \midrule
         & Purity(\%)   & Run Time(s)  & Purity(\%)  & Run Time(s) \\ \midrule
ORL      & 88.25        & 10           & 74.75       & 135         \\
BBCSport & 96.70        & 4            & 73.34       & 37          \\
BBC      & 82.18        & 9            & 64.01       & 122         \\ \bottomrule
\end{tabular}%
}
\caption{Clustering and Run Time performance comparison between fixed and adaptive feature selection.}
\label{tab:ablation_study}
\end{table}

\subsection{Ablation Study}
The comparison between dynamic and fixed weight parameter, as shown in Table \ref{tab:ablation_study}, highlights the clear advantages of using a dynamic update based on control theory principles in clustering tasks. The dynamic approach significantly improves clustering purity and reduces run time across three datasets. For instance, on the BBCSport dataset, there is a $23.36\%$ improvement in purity and a $33$-second reduction in run time compared to the fixed approach. Similar improvements are observed in the other two datasets. These findings demonstrate that dynamically updating the parameter controlling feature selection degree allows DMFAW to converge faster and adapt more effectively to the data.

\begin{figure}[t]
    \centering
    \includegraphics[width=0.9\columnwidth]{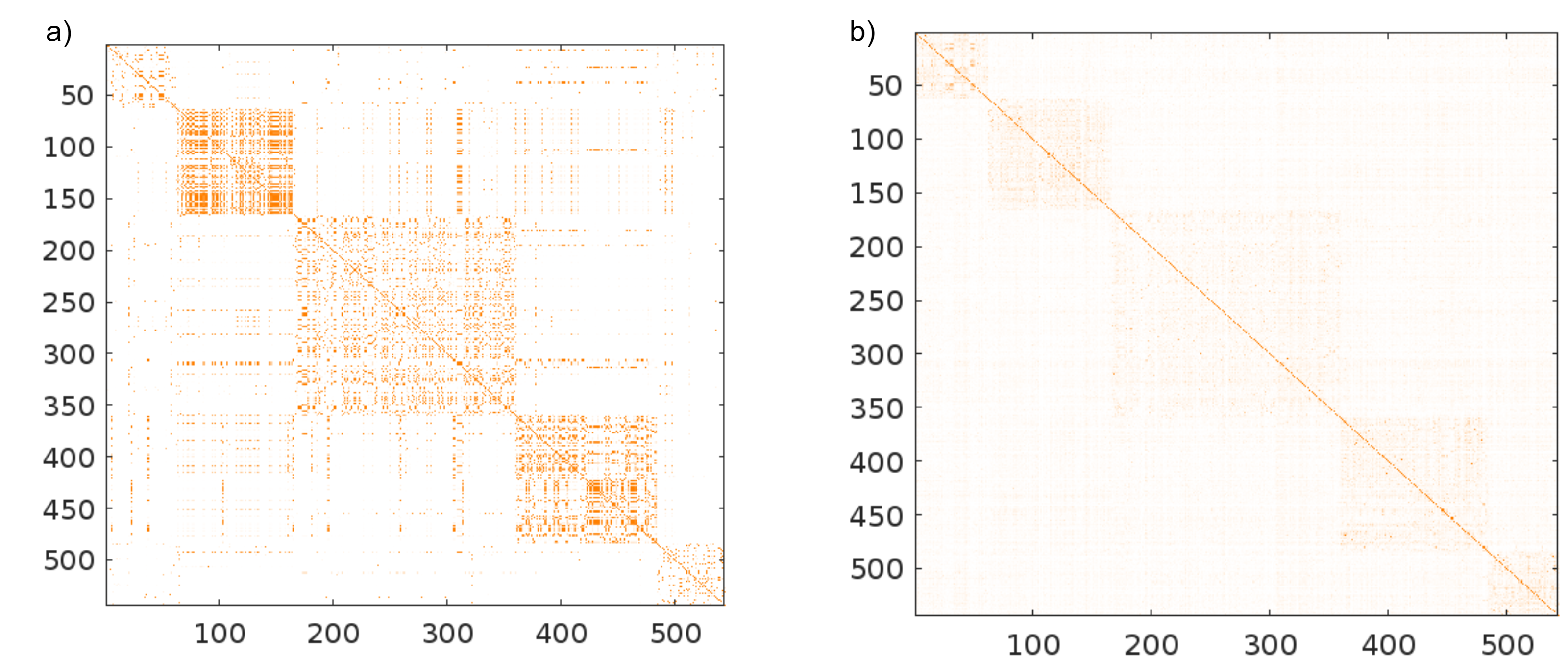}
    \caption{Visualization of pairwise similarity on BBCSport Dataset, using our method (a) and MVC-DMF-PA (b).}
    \label{fig:feature_selection_viz}
\end{figure}

\subsection{Visualization}
To evaluate the effectiveness of the dynamic feature selection with respect to clustering results, we computed the pairwise similarity matrix for the BBCSport dataset, which comprises five clusters. Figure \ref{fig:feature_selection_viz} illustrates the importance of adaptive feature selection by comparing the visual outputs of our approach with those of MVC-DMF-PA.\par
Figure \ref{fig:feature_selection_viz}-a, corresponding to our method, displays distinct, well-defined clusters along the diagonal. This pattern suggests that the adaptive feature selection mechanism effectively emphasizes relevant features while diminishing the influence of irrelevant ones, leading to clearer and more compact clusters. In contrast, Figure \ref{fig:feature_selection_viz}-b, which represents MVC-DMF-PA, shows more diffuse clusters with less distinct boundaries. This lack of clear cluster structure indicates that the features are not as effectively leveraged in MVC-DMF-PA, resulting in reduced clustering quality.

\section{Conclusion}
This paper introduces Deep Matrix Factorization with Adaptive Weights for Multi-View Clustering (DMFAW). Using a weighted Deep Semi-NMF methodology, DMFAW simultaneously extracts local partition matrices and performs feature selection, significantly enhancing the robustness of multi-view clustering. A dynamic parameter update mechanism, inspired by Control Theory's PI Stepsize Control, ensures feature selection adaptability to diverse datasets while accelerating convergence. Extensive experiments on benchmark datasets demonstrate the effectiveness and efficiency of DMFAW, and its superior performance compared to other state-of-the-art methods. Additionally, the success of our approach is influenced by the quality of the views. In the future, we will explore methods to improve the robustness of our approach in the presence of noisy views.

\newpage
\bibliographystyle{elsarticle-num-names} 
\bibliography{refs.bib}





\end{document}